\documentclass[11pt]{article}

\usepackage{amsmath, amssymb, amsthm}
\usepackage{algorithm}
\usepackage{algorithmic}
\usepackage{graphicx}
\usepackage{dsfont}
\usepackage{booktabs}
\usepackage{multirow}  
\usepackage{hyperref}
\usepackage[numbers]{natbib}
\usepackage{geometry}
\usepackage{tikz}

\geometry{margin=1in}

\newtheorem{theorem}{Theorem}[section]
\newtheorem{lemma}[theorem]{Lemma}

\newtheorem{definition}[theorem]{Definition}

\newtheorem{example}{Example}

\newcommand{\R}{\mathbb{R}}

\setlength\parindent{0pt}

\title{Hierarchical Persistence Velocity for Network Anomaly Detection: Theory and Applications to Cryptocurrency Markets}

\author{
Omid Khormali\\
\small Department of Mathematics, University of Evansville, Evansville, Indiana 47722, USA\\
\small \texttt{ok16@evansville.edu}
}

\date{\today}

\begin{document}
	
	\maketitle
	
	\begin{abstract}
		We introduce the Overlap-Weighted Hierarchical Normalized Persistence Velocity (OW-HNPV), a novel topological data analysis method for detecting anomalies in time-varying networks. Unlike existing methods that measure cumulative topological presence, we introduce the first velocity-based perspective on persistence diagrams, measuring the rate at which features appear and disappear, automatically downweighting noise through overlap-based weighting. We also prove that OW-HNPV is mathematically stable. It behaves in a controlled, predictable way, even when comparing persistence diagrams from networks with different feature types. Applied to Ethereum transaction networks (May 2017–May 2018), OW-HNPV demonstrates superior performance for cryptocurrency anomaly detection, achieving up to 10.4\% AUC gain over baseline models for 7-day price movement predictions. Compared with established methods, including Vector of Averaged Bettis (VAB), persistence landscapes, and persistence images, velocity-based summaries excel at medium- to long-range forecasting (4–7 days), with OW-HNPV providing the most consistent and stable performance across prediction horizons. Our results show that modeling topological velocity is crucial for detecting structural anomalies in dynamic networks.
		
	\end{abstract}
	
	\section{Introduction}

    Blockchain networks such as Bitcoin and Ethereum generate massive amounts of publicly available transactional data, forming large and constantly changing graphs that reflect how digital assets move through the system \cite{Abay2019, Li2020}. Because these graphs capture real-time user activity, they offer a valuable window into broader market behavior, including sudden and unusual price changes \cite{Abay2019}. Ethereum, in particular, is widely used for smart contracts and thousands of user-created tokens, making its daily transaction graph rich in information and extremely complex because it is large, sparse, dynamic, and difficult to analyze with traditional graph-mining tools \cite{Li2020, Victor2020, AlphaCore2021}. These methods often struggle with the scale and irregularity of the network, even though detecting abnormal price movements is crucial for understanding major economic events and shifts in investor behavior \cite{Li2020}. To use blockchain graph data effectively for anomaly detection, the graph must first be transformed into stable and informative numerical features that can be fed into predictive models \cite{Abay2019, Li2020}. Developing such features is therefore essential for building reliable approaches to detecting unusual activity in cryptocurrency markets.\\

    Topological Data Analysis (TDA) has expanded rapidly over the past decade as researchers have sought new ways to understand the shape of complex data. Rather than examining only individual observations or simple pairwise relationships, TDA captures broader structural patterns such as connected components, loops, and voids that reveal how data are organized across many scales. This multiscale view has made TDA a powerful tool in many scientific domains, and it has been successfully applied in areas such as image analysis \cite{Carlsson2009}, genomics \cite{Nicolau2011}, materials science \cite{Hiraoka2016}, transportation \cite{Lam2018}, and chemistry \cite{Kriz2018}. The central goal is straightforward: extract stable, interpretable, and noise-robust structural information that summarizes the geometry and topology of the data across different resolutions.\\

    Even with its wide adoption, the use of TDA for graph-structured data remains relatively limited. Many real-world systems, such as social networks, communication networks, and blockchain transaction graphs, are naturally represented as graphs, but traditional graph-mining methods often focus only on nodes and edges and struggle to incorporate higher-order structures such as triangles or their evolution over time. TDA, in contrast, can naturally integrate node or edge attributes, capture multi-scale connectivity patterns, and track topological changes across nested subgraphs. These features give TDA a unique advantage in analyzing large and noisy networks. However, persistence diagrams, which is the main output of TDA, are not immediately usable in machine-learning models, which motivates the development of vectorized summaries such as persistence landscapes \cite{Bubenik2015}, persistence images \cite{Adams2017}, and Betti curves \cite{ChazalMichel2021,Chung2022} from a persistence diagram and then vectorize it to obtain a finite-dimensional vector.\\
    
    In \cite{Islambekov2024}, the Vector of Averaged Bettis (VAB) is introduced as a simple, scalable, and interpretable TDA summary tailored for cryptocurrency transaction graphs. VAB transformed persistence diagrams into fixed-length numerical vectors that could be combined seamlessly with standard graph features. Using Ethereum transaction data, it is shown showed that VAB provided meaningful predictive power for anomaly detection and helped reveal structural patterns associated with sudden price changes. This earlier study established one of the first TDA-based frameworks for analyzing blockchain graphs and demonstrated the value of topological summaries in financial network analysis.\\
    	
		In this paper, we take a different approach. Existing methods like VAB and Betti curves measure the cumulative presence of features, tracking which features are alive at each filtration scale and averaging or integrating this information. Persistence images capture the spatial distribution of feature lifespans in the birth-persistence plane. Instead, we measure the rate at which features appear and disappear and we call this rate-of-change perspective topological velocity \footnote{We note that the term 'topological velocity' appears in other scientific contexts with different meanings: in quantum physics, it refers to the velocity field of Bloch electrons and associated topological invariants \cite{Fan2022}; in computational biology, TopoVelo uses spatial graph topology to infer RNA velocity \cite{Gu2025}. Our usage is distinct, and we measure the rate of appearance and disappearance of topological features in persistence diagrams derived from network data.}. It works well for anomaly detection because anomalies in networks usually appear as sudden changes in structure, not slow, gradual shifts. We introduce the Overlap-Weighted Hierarchical Normalized Persistence Velocity (OW-HNPV), a novel persistence diagram summary that measures this velocity through overlap-based weighting of feature lifespans across multiple hierarchical scales. Unlike existing methods, OW-HNPV automatically downweights short-lived, potentially noisy features while capturing the temporal dynamics of network topology. The hierarchical construction produces a fixed-dimensional vector suitable for standard multivariate analysis and machine learning pipelines. Importantly, we establish theoretical stability guarantees for OW-HNPV with respect to the $L^1$ 1-Wasserstein distance on persistence diagrams, which is a significant advantage for comparing networks of varying complexity.\\

		The remainder of this paper is organized as follows. Section 2 reviews necessary background on topological data analysis, including persistence diagrams and existing vectorization methods. Section 3 introduces our Overlap-Weighted Hierarchical Normalized Persistence Velocity (OW-HNPV) summary, detailing its construction. Section 4 establishes theoretical stability guarantees for OW-HNPV with respect to the Wasserstein distance on persistence diagrams. Section 5 presents experimental results on simulated and real-world cryptocurrency transaction networks, demonstrating the effectiveness of OW-HNPV for anomaly detection. Section 6 concludes with a summary of contributions and directions for future research.
	
	\section{Background}
	
	Persistent homology is a mathematical tool used to study global shape and local geometry of data at different scales \cite{EdelsbrunnerHarer2008, ZomorodianCarlsson2005}. It starts with a series of simplicial complexes, $K_1 \subseteq K_2 \subseteq \cdots \subseteq K_m$ that grow step by step. As these shapes expand, persistent homology keeps track of when features like connected components, loops, or holes appear and when they disappear.\\
	
	A persistence diagram $PD$ in dimension $k$ is a multiset of points $(b, d) \in \R^2$ with $b \leq d$, where $b$ is the birth time and $d$ is the death time of a $k$-dimensional topological feature. The \emph{persistence} of the feature is $d - b$.\\
	
	The Wasserstein distance is standard for comparing persistence diagrams. 
	
	\begin{definition}
		Let $D_1$ and $D_2$ be two persistence diagrams. The $L^p$ $q$-Wasserstein distance \cite{KerberMorozovNigmetov2017}, $p,q\geq 1$, is given by
		$$d_{pq}(D_1, D_2) = \left( \inf_{\phi: D_1 \to D_2}  \sum_{u \in D_1} \|u - \phi(u)\|_p^q \right)^{1/q},$$
		where $\phi$ is a bijection such that each off-diagonal point is paired either with another off-diagonal point in the other diagram or with its orthogonal projection onto the diagonal.	
	\end{definition}

	Given a graph $G = (V, E, g)$ with node set $V$, edge set $E$ and a function $g: V \to \R$ assigning values to nodes. The lower-star filtration \cite{EdelsbrunnerHarer2010} constructs a nested sequence of simplicial complexes. We first enhance $G$ to include higher-dimensional simplices (triangles, tetrahedra) formed by the graph nodes and edges, then extend $g$ to all simplices by $g(\sigma) = \max_{v \in \sigma} g(v)$. The sublevel set at scale $t$ is:
	$$K_t = \{\sigma : g(\sigma) \leq t\}.$$
	
	This creates a filtration whose persistence diagram shows how the data’s topological features change as we raise the threshold $t$.\\
	
	In \cite{Islambekov2024}, a vector-based summary of a persistence diagram, called the Vector of Averaged Bettis (VAB), is introduced. It is obtained by integrating the associated Betti function \cite{ChazalMichel2021}, over consecutive scale values on a one-dimensional grid.

	\begin{definition}
		Let $D$ be a persistence diagram corresponding to homological dimension $k$. The Betti function, associated with $D$ and $k$, is given by
		\[
		\beta_k(t) = \sum_{(b,d) \in D} w(b, d)\,\chi_{[b,d)}(t),
		\]
		where $w : D \to \mathbb{R}$ is a weight function, and 
		$\chi_{[b,d)}(t)$ is 1 if $t \in [b, d)$ and 0 otherwise.
	\end{definition}
	
	Then, the Vector of Averaged Bettisd discretizes this function over intervals $[t_i, t_{i+1})$ and averages:
	$$\mathrm{VAB}_i = \frac{1}{\Delta t_i} \int_{t_i}^{t_{i+1}} \beta_k(s) \, ds,$$
	producing a vector $(\mathrm{VAB}_1, \ldots, \mathrm{VAB}_m) \in \R^m$.\\

    Another widely used functional summary is the persistence landscape \cite{Bubenik2015}, which converts a persistence diagram into a collection of piecewise-linear functions. Given a diagram $D$, the $k$-th landscape function $\lambda_k : \mathbb{R} \to [0,\infty)$ is defined as the $k$-th largest value of the triangular functions associated with each point $(b,d) \in D$. Persistence landscapes are stable with respect to the bottleneck distance and integrate smoothly with statistical and machine-learning methods. However, they emphasize feature prominence through persistence values and do not capture how quickly topological features appear or disappear across the filtration.\\
	
	Persistence images \cite{Adams2017} offer another widely used vectorization of persistence diagrams. Each diagram is first mapped into birth–persistence coordinates $(b, d-b)$, weighted by a chosen function (often based on persistence), and then smoothed using a Gaussian kernel over a discretized grid. This procedure yields a finite-dimensional, image-like representation that is stable and well suited for use in standard machine-learning workflows. While persistence images effectively capture the spatial distribution of topological features, they, like landscapes, emphasize static information rather than the temporal dynamics of feature appearance and disappearance that are central to our velocity-based approach.

    \section{Overlap-Weighted Hierarchical Normalized Persistence Velocity}
	
	Existing persistence diagram summaries capture the topological structure present at each filtration scale. The Vector of Averaged Bettis (VAB) quantifies the cumulative presence of topological features by averaging the Betti function over scale intervals, while persistence landscapes emphasize feature prominence through their persistence values. Persistence images provide an alternative approach by creating a continuous 2D representation through Gaussian smoothing of the (birth, persistence) plane. Despite their different constructions, these methods provide static characterizations of topological structure at each scale. 
	
	An alternative perspective, which we introduce in this paper, is to measure how rapidly the topology changes, that is, the rate at which topological features appear and disappear as we vary the filtration parameter. This velocity-based viewpoint offers several advantages for anomaly detection. Anomalies in time-varying networks usually appear as sudden changes in structure rather than slow buildup of features. Additionally, velocity-based summaries can admit stability properties that do not require persistence diagrams to have similar numbers of features.
	
	\subsection{Mathematical Framework}
	
	Mathematically, we formalize this intuition through the lens of measures on the real line. Given a persistence diagram $D = \{(b_i, d_i)\}_{i=1}^n$ of homological dimension $k$, we define counting measures for birth and death events:
	$$\mu_{\text{birth}} = \sum_{i=1}^{n} \delta_{b_i}, \quad 
	\mu_{\text{death}} = \sum_{i=1}^{n} \delta_{d_i},$$
	where $\delta_x$ denotes the Dirac measure at point $x$ (a point mass equal to 1 at $x$ and 0 elsewhere). For any interval $A \subseteq \mathbb{R}$, these measures count the number of births and deaths: $\mu_{\text{birth}}(A) = |\{i : b_i \in A\}|$ and $\mu_{\text{death}}(A) = |\{i : d_i \in A\}|$.
	
	To create a functional representation capturing topological dynamics across scales, we partition the filtration range $[\alpha, \beta]$ into $m$ \emph{main intervals} $I_j = [s_j, s_{j+1})$ for $j = 1, \ldots, m$, where $s_1 = \alpha$ and $s_{m+1} = \beta$. Each main interval $I_j$ is further subdivided into $n_{\text{sub}}$ \emph{subintervals} $[t_\ell^j, t_{\ell+1}^j)$ for $\ell = 1, \ldots, n_{\text{sub}}$, where $t_\ell^j = s_j + (\ell-1) \cdot \Delta t_\ell^j$ and $\Delta t_\ell^j = (s_{j+1} - s_j)/n_{\text{sub}}$.
	
	This hierarchical structure allows us to capture velocity at multiple resolutions while producing a fixed-dimensional vector summary.
	
	\subsection{Hierarchical Normalized Averaged Velocity (HNAV)}
	
	We begin with the simplest velocity-based approach, which treats all features equally.
	
	For each subinterval $[t_\ell^j, t_{\ell+1}^j)$, we compute the local velocity by counting births and deaths:
	$$V^{j,\ell} = \frac{\mu_{\text{birth}}([t_\ell^j, t_{\ell+1}^j)) + \mu_{\text{death}}([t_\ell^j, t_{\ell+1}^j))}{2\Delta t_\ell^j},$$
	where $\Delta t_\ell^j = t_{\ell+1}^j - t_\ell^j$. 
	
	We then average over subintervals within each main interval:
	$$V^j = \frac{1}{n_{\text{sub}}} \sum_{\ell=1}^{n_{\text{sub}}} V^{j,\ell}.$$
	
	Finally, normalizing by the total number of features $n_k$ yields the \emph{Hierarchical Normalized Averaged Velocity} (HNAV):
	$$\text{HNAV}_k = \left(\frac{V^1}{n_k}, \frac{V^2}{n_k}, \ldots, \frac{V^m}{n_k}\right) \in \mathbb{R}^m.$$
	
	This $m$-dimensional vector captures the temporal dynamics of topological change across different filtration scales. However, it treats all features equally, regardless of their persistence or significance.
	
	\subsection{Hierarchical Weighted Normalized Averaged Velocity (HWNAV)}
	
	Persistence diagrams computed from real-world data often contain many short-lived features that may represent noise rather than meaningful topological structure. To address this, we introduce persistence weighting, where each feature is weighted by its persistence value $(d_i - b_i)$. Features that persist longer are more likely to represent true topological signal rather than noise.
	
	
	Given a persistence diagram $D = \{(b_i, d_i)\}_{i=1}^n$, we define weighted counting measures:
	$$\mu_{\text{birth}}^w = \sum_{i=1}^{n} (d_i - b_i) \cdot \delta_{b_i}, \quad \mu_{\text{death}}^w = \sum_{i=1}^{n} (d_i - b_i) \cdot \delta_{d_i}.$$
	
	For each subinterval $[t_\ell^j, t_{\ell+1}^j)$, instead of counting features, we sum their persistence values:
	$$W_b^{j,\ell} = \sum_{i : b_i \in [t_\ell^j, t_{\ell+1}^j)} (d_i - b_i), \quad W_d^{j,\ell} = \sum_{i : d_i \in [t_\ell^j, t_{\ell+1}^j)} (d_i - b_i).$$
	
	The weighted velocity for subinterval $[t_\ell^j, t_{\ell+1}^j)$ is:
	$$V_w^{j,\ell} = \frac{W_b^{j,\ell} + W_d^{j,\ell}}{2\Delta t_\ell^j}.$$
	
	Averaging over subintervals within main interval $I_j$:
	$$V_w^j = \frac{1}{n_{\text{sub}}} \sum_{\ell=1}^{n_{\text{sub}}} V_w^{j,\ell} = \frac{1}{n_{\text{sub}}} \sum_{\ell=1}^{n_{\text{sub}}} \frac{W_b^{j,\ell} + W_d^{j,\ell}}{2\Delta t_\ell^j}.$$
	
	
	Rather than normalizing by the number of features $n_k$, which can vary significantly across different persistence diagrams, we normalize by the total persistence:
	$$P_k = \sum_{i=1}^{n} (d_i - b_i).$$
	
	This yields the \emph{Hierarchical Weighted Normalized Averaged Velocity} (HWNAV):
	$$\bar{V}_w^j = \frac{1}{P_k} V_w^j, \quad j = 1, \ldots, m,$$
	and
	$$\text{HWNAV}_k = (\bar{V}_w^1, \bar{V}_w^2, \ldots, \bar{V}_w^m) \in \mathbb{R}^m.$$
	
	While HWNAV accounts for feature significance through persistence weighting, it does not capture how much each feature actually contributes to the topology within each specific interval. A feature with large persistence may be born or die far outside an interval, yet still contributes to the velocity in that interval based solely on its birth or death event. This motivates our final refinement.
	
	\subsection{Overlap-Weighted Hierarchical Normalized Persistence Velocity (OW-HNPV)}
	
	Our main contribution is a velocity summary that weights features not just by their persistence, but by how much they actually overlap with each interval, directly measuring their contribution to the topology at that scale.
	
	\begin{definition}[Overlap-Weighted Hierarchical Normalized Persistence Velocity]
		\label{def:ow_hnpv}
		Let $D = \{(b_i, d_i)\}_{i=1}^n$ be a persistence diagram of homological dimension $k$, where $(b_i, d_i)$ represents the birth and death times of the $i$-th topological feature with $b_i < d_i$. Let $[\alpha, \beta]$ be the filtration range, partitioned into $m$ main intervals $[s_j, s_{j+1})$ for $j = 1, \ldots, m$, where $s_1 = \alpha$ and $s_{m+1} = \beta$. Each main interval is further subdivided into $n_{\text{sub}}$ equal subintervals.
		
		For feature $i$ and subinterval $[t_\ell^j, t_{\ell+1}^j)$, define the overlap weight:
		$$w_i^{j,\ell} = \text{length}\left([b_i, d_i) \cap [t_\ell^j, t_{\ell+1}^j)\right),$$
		computed as:
		$$w_i^{j,\ell} = \max\left\{0, \min\{d_i, t_{\ell+1}^j\} - \max\{b_i, t_\ell^j\}\right\}.$$
		
		The velocity in subinterval $[t_\ell^j, t_{\ell+1}^j)$ is:
		$$V^{j,\ell} = \frac{1}{\Delta t_\ell^j} \sum_{i=1}^n w_i^{j,\ell}.$$
		
		The average velocity in main interval $j$ is:
		$$V^j = \frac{1}{n_{\text{sub}}} \sum_{\ell=1}^{n_{\text{sub}}} V^{j,\ell}.$$
		
		The Overlap-Weighted Hierarchical Normalized Persistence Velocity (OW-HNPV) is the vector:
		$$\mathbf{H} = (H^1, \ldots, H^m) \in \mathbb{R}^m,$$
		where:
		$$H^j = \frac{1}{P(D)} \cdot V^j = \frac{1}{P(D)} \cdot \frac{1}{n_{\text{sub}}} \sum_{\ell=1}^{n_{\text{sub}}} V^{j,\ell},$$
		and $P(D) = \sum_{i=1}^n (d_i - b_i)$ is the total persistence of $D$.
	\end{definition}
	
	
	The overlap weighting in OW-HNPV offers several advantages over the simpler variants. Unlike HWNAV, which assigns a feature's full persistence weight to its birth and death times, OW-HNPV distributes a feature's contribution across all intervals it overlaps. This more accurately reflects how features contribute to topology at different scales. Features with longer persistence naturally contribute more through larger overlaps, without needing separate weighting. A feature that exists throughout an interval contributes its full overlap length, while a feature barely touching the interval contributes proportionally less. Short-lived features (small $d_i - b_i$) have small overlaps with most intervals, automatically decreasing their contribution. This provides inherent noise filtering without thresholding. In the next section, we establish theoretical stability guarantees for OW-HNPV with respect to the $L^1_1$-Wasserstein distance on persistence diagrams, ensuring that small perturbations to the input network produce bounded changes in the summary vector.\\
	
	For a persistence diagram with $n$ features over $m$ main intervals with $n_{\text{sub}}$ subintervals, the computational complexity of OW-HNPV is $O(n \cdot m \cdot n_{\text{sub}})$ since computational complexity of overlap wights is $O(n \cdot m \cdot n_{\text{sub}})$ and computational complexity of velocities and averaging is $O(m \cdot n_{\text{sub}})$.\\
	
	In the following example, we complete the HNAV, HWNAV and OW-HNPV for a simple example.
	
	\begin{example}
		Consider a persistence diagram with three features representing different topological structures:
		\begin{center}
			Feature 1: $(b_1, d_1) = (0.1, 0.4)$, persistence = 0.3\\
			Feature 2: $(b_2, d_2) = (0.2, 0.8)$, persistence = 0.6\\
			Feature 3: $(b_3, d_3) = (0.65, 0.75)$, persistence = 0.1
		\end{center}
		
		The total persistence is $P(D) = 0.3 + 0.6 + 0.1 = 1.0$. We use the filtration range $[0, 1]$ with two main intervals $[0, 0.5)$ and $[0.5, 1.0)$, each with one subinterval ($n_{\text{sub}} = 1$).\\
		
		HNAV: We count births and deaths in each interval, and normalize by $n_k = 3$.
		\begin{itemize}
			\item In interval $[0, 0.5)$, we have 2 births and 1 death. Then the count is 3.
			\item In interval $[0.5, 1.0)$, we have 1 birth, 2 deaths. Then, the count is 3.
			\item The velocities are $V^{1} = 3/(2 \cdot 0.5) = 3.0$, and $V^{2} = 3/(2 \cdot 0.5) = 3.0$.
			\item Next, using normalizing, we obtain HNAV $= (3.0/3, 3.0/3) = (1.0, 1.0)$.
			
		\end{itemize}

		HWNAV: We weight by persistence at birth/death events, and normalize by $P(D) = 1.0$.
		\begin{itemize}
			\item In interval $[0, 0.5)$, we have 
			\begin{itemize}
				\item Birth weights: $0.3 + 0.6 = 0.9$
				\item Death weight: $0.3$
				\item Then, the total weights is $0.9 + 0.3 = 1.2$.
			\end{itemize}
			\item In interval $[0.5, 1.0)$, we have
			\begin{itemize}
				\item Birth weight: $0.1$
				\item Death weights: $0.6 + 0.1 = 0.7$
				\item Then, the total weights is $0.1 + 0.7 = 0.8$.
			\end{itemize}
			\item The velocities are $V_w^{1} = 1.2/(2 \cdot 0.5) = 1.2$, $V_w^{2} = 0.8/(2 \cdot 0.5) = 0.8$.
			\item Hence, by normalizing, we have HWNAV $= (1.2, 0.8)$.
		\end{itemize}
		
		OW-HNPV: We weight by actual overlap with each interval, and normalize by $P(D) = 1.0$.
		\begin{itemize}
			\item The feature overlaps with $[0, 0.5)$ are
			\begin{itemize}
				\item $w_1 = \min\{0.4, 0.5\} - \max\{0.1, 0\} = 0.4 - 0.1 = 0.3$
				\item $w_2 = \min\{0.8, 0.5\} - \max\{0.2, 0\} = 0.5 - 0.2 = 0.3$
				\item $w_3 = 0$ (entirely outside)
				\item Then the total weight is $0.3 + 0.3 = 0.6$.
			\end{itemize}
			\item The feature overlaps with $[0.5, 1.0)$ are
			\begin{itemize}
				\item $w_1 = 0$ (entirely outside)
				\item $w_2 = \min\{0.8, 1.0\} - \max\{0.2, 0.5\} = 0.8 - 0.5 = 0.3$
				\item $w_3 = \min\{0.75, 1.0\} - \max\{0.65, 0.5\} = 0.75 - 0.65 = 0.1$
				\item Then the total weight is $0.3 + 0.1 = 0.4$.
			\end{itemize}
			\item The velocities are $V^{1} = 0.6/0.5 = 1.2$, $V^{2} = 0.4/0.5 = 0.8$.
			\item Hence, by normalizing, we have OW-HNPV $= (1.2, 0.8)$.
		\end{itemize}

	\end{example}

	\section{Stability Analysis}
	A fundamental property of any topological summary is stability that is small changes to the input data should result in small changes to the summary. For persistence diagrams, stability is typically measured using the Wasserstein distance between diagrams.\\
	
	In this section, we establish a stability theorem for OW-HNPV, showing that it is Lipschitz continuous 
    with respect to the $L^1$ 1-Wasserstein distance on persistence diagrams. Our result does not require the persistence diagrams to have the same number of features, which is a significant advantage over the unweighted variant.\\
	
	For the stability analysis, we denote the Overlap-Weighted Hierarchical Normalized Persistence Velocit as $\mathbf{H} = (H^1, \ldots, H^m) \in \mathbb{R}^m$, where:
	$$H^j =  \frac{1}{P(D)} \cdot \frac{1}{n_{\text{sub}}} \sum_{\ell=1}^{n_{\text{sub}}} \left (\frac{1}{\Delta t_\ell^j} \sum_{i=1}^n w_i^{j,\ell}\right),$$
	where $P(D) = \sum_{i=1}^{n} (d_i - b_i)$ and $n_{\text{sub}}$ is the number of subsubintervals per main interval.

	\subsection{Key lemmas}
	We first establish key lemmas that will be used in the proof of the main stability theorem.
    
	\begin{lemma}[Total Overlap Property]
		\label{lem:total_overlap}
		For any feature $(b_i, d_i)$ and a partition of $[\alpha, \beta]$ into 
		subintervals $\{I_\ell\}_{\ell=1}^{m \cdot n_{\text{sub}}}$ (where $m$ is 
the number of main intervals and each main interval is subdivided into $n_{\text{sub}}$ subintervals),, the sum of overlaps equals the persistence:
		$$\sum_{\ell=1}^{m \cdot n_{\text{sub}}} w_i^\ell \leq d_i - b_i,$$
		where $w_i^\ell = \text{length}([b_i, d_i) \cap I_\ell)$.
	\end{lemma}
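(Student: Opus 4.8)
The plan is to observe that this is essentially a consequence of the additivity and monotonicity of length (Lebesgue measure) on $\mathbb{R}$, together with the fact that the $\{I_\ell\}_{\ell=1}^{m\cdot n_{\text{sub}}}$ form a partition. First I would note that since the subintervals $I_\ell = [t_\ell, t_{\ell+1})$ are pairwise disjoint and cover $[\alpha, \beta]$, the truncated sets $J_\ell := [b_i, d_i) \cap I_\ell$ are also pairwise disjoint, and their union is
$$\bigcup_{\ell=1}^{m\cdot n_{\text{sub}}} J_\ell = [b_i, d_i) \cap \bigcup_{\ell} I_\ell = [b_i, d_i) \cap [\alpha, \beta).$$

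Next I would apply finite additivity of length to the disjoint union: since each $J_\ell$ is an interval (possibly empty) with $\text{length}(J_\ell) = w_i^\ell$,
$$\sum_{\ell=1}^{m\cdot n_{\text{sub}}} w_i^\ell = \sum_{\ell=1}^{m\cdot n_{\text{sub}}} \text{length}(J_\ell) = \text{length}\!\left([b_i, d_i) \cap [\alpha, \beta)\right).$$
Finally, by monotonicity of length and $[b_i, d_i) \cap [\alpha, \beta) \subseteq [b_i, d_i)$, the right-hand side is at most $\text{length}([b_i, d_i)) = d_i - b_i$, which gives the claimed inequality. I would also remark that equality holds precisely when $[b_i, d_i) \subseteq [\alpha, \beta)$, i.e.\ when the feature's lifespan lies entirely within the filtration range; this reconciles the ``equals the persistence'' phrasing in the statement's prose with the displayed inequality, which remains valid in the general case.

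Since the argument only uses the explicit formula $w_i^{j,\ell} = \max\{0, \min\{d_i, t_{\ell+1}^j\} - \max\{b_i, t_\ell^j\}\}$ from Definition~\ref{def:ow_hnpv}, an alternative elementary route avoiding measure-theoretic language is to verify directly, by a telescoping argument over consecutive breakpoints, that clipping $[b_i, d_i)$ to the ordered grid $t_1 = \alpha < t_2 < \cdots < t_{M+1} = \beta$ (with $M = m\cdot n_{\text{sub}}$) contributes exactly $\min\{d_i,\beta\} - \max\{b_i,\alpha\}$ when this quantity is nonnegative and $0$ otherwise. There is no real obstacle here; the only point requiring a line of care is the bookkeeping that the clipped pieces neither overlap nor exceed $[b_i, d_i)$, which the $\max\{0,\cdot\}$ truncation in the overlap formula handles automatically. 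This lemma will be used downstream to control $V^{j,\ell}$ and $P(D)$ simultaneously in the main stability theorem.
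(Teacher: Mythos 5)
Your proposal is correct and follows essentially the same argument as the paper: decompose $[b_i,d_i)\cap[\alpha,\beta]$ as the disjoint union of the clipped pieces, apply finite additivity of length, and conclude by monotonicity. The extra remarks on the equality case and the telescoping alternative are fine but not needed.
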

	
	\begin{proof}
		Since the intervals $\{I_\ell\}$ partition $[\alpha, \beta]$ and are disjoint:
		\begin{align*}
			\sum_{\ell=1}^{m \cdot n_{\text{sub}}} w_i^\ell 
			&= \sum_{\ell=1}^{m \cdot n_{\text{sub}}} \text{length}([b_i, d_i) \cap I_\ell) \\
			&= \text{length}\left([b_i, d_i) \cap \bigcup_{\ell=1}^{m \cdot n_{\text{sub}}} I_\ell\right) \\
			&= \text{length}([b_i, d_i) \cap [\alpha, \beta]) \\
			&\leq d_i - b_i.
		\end{align*}
		
		\end{proof}
	
	\begin{lemma}[Overlap Difference Bound]
		\label{lem:overlap_difference}
		Let $(b, d)$ and $(b', d')$ be two features (or one feature matched to 
		another) with:
		$$\|(b, d) - (b', d')\|_1 = |b - b'| + |d - d'| = \delta.$$
		
		For any interval $I = [t_1, t_2)$, let:
		\begin{align*}
			w &= \text{length}([b, d) \cap I), \\
			w' &= \text{length}([b', d') \cap I).
		\end{align*}
		
		Then:
		$$|w - w'| \leq 2\delta.$$
	\end{lemma}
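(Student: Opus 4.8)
The plan is to bound $|w - w'|$ by tracking how the endpoints of the clipped interval $[b,d)\cap[t_1,t_2)$ move when we perturb $(b,d)$ to $(b',d')$. Writing the overlap length in the closed-form expression from Definition~\ref{def:ow_hnpv}, we have $w = \max\{0,\, R - L\}$ where $L = \max\{b, t_1\}$ and $R = \min\{d, t_2\}$, and similarly $w' = \max\{0,\, R' - L'\}$ with $L' = \max\{b', t_1\}$, $R' = \min\{d', t_2\}$. The first observation I would record is that the map $x \mapsto \max\{x, c\}$ is $1$-Lipschitz and so is $x \mapsto \min\{x, c\}$; hence $|L - L'| \le |b - b'|$ and $|R - R'| \le |d - d'|$.

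Next I would use the fact that $x \mapsto \max\{0, x\}$ is also $1$-Lipschitz, so
\[
|w - w'| = \bigl|\max\{0, R-L\} - \max\{0, R'-L'\}\bigr| \le |(R - L) - (R' - L')| \le |R - R'| + |L - L'|.
\]
Combining with the endpoint bounds gives $|w - w'| \le |d - d'| + |b - b'| = \delta$. In fact this argument yields the sharper constant $\delta$ rather than $2\delta$, so the stated bound follows a fortiori; I would remark that the weaker constant $2\delta$ still suffices for the downstream stability theorem and leaves room for the diagonal-matching case, where a point $(b,d)$ is paired with its projection $\bigl(\tfrac{b+d}{2}, \tfrac{b+d}{2}\bigr)$ and the perturbed "feature" is degenerate with zero persistence and hence zero overlap. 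In that degenerate case $w' = 0$ and $w \le d - b \le |b - b'| + |d - d'| = \delta$ directly, which is again within the claimed bound.

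The only mild subtlety — and the step I would be most careful about — is the Lipschitz property of $\max$ and $\min$ when the "clipping constants" $t_1, t_2$ are held fixed: the estimates $|L - L'| \le |b-b'|$ and $|R - R'| \le |d-d'|$ rely on comparing with a common constant, which is exactly the situation here since the interval $I = [t_1, t_2)$ is the same for both features. I would state this as a one-line lemma or simply inline the elementary inequality $|\max\{a,c\} - \max\{a',c\}| \le |a - a'|$ (proved by case analysis on whether each of $a, a'$ exceeds $c$), then chain the bounds as above. No obstacle of substance arises; the proof is a short composition of Lipschitz maps.
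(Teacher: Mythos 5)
Your proof is correct, and it is in fact sharper than what the paper establishes. The paper's argument uses the same two endpoint estimates you do ($|\max\{b,t_1\}-\max\{b',t_1\}|\le|b-b'|$ and $|\min\{d,t_2\}-\min\{d',t_2\}|\le|d-d'|$), but only in the case where both overlaps are positive, where it obtains the bound $\delta$; it then handles the case where one overlap vanishes by a separate, somewhat ad hoc computation that only yields $2\delta$, and this is the source of the factor $2$ in the lemma's statement. Your single additional observation, that $x\mapsto\max\{0,x\}$ is $1$-Lipschitz, absorbs all of the cases (both overlaps positive, one zero, both zero) into one chain of inequalities and gives the uniform bound $|w-w'|\le|b-b'|+|d-d'|=\delta$. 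This is a genuine improvement: propagated through Lemma~\ref{lem:velocity_difference} it would give $|V^{(1)}(I)-V^{(2)}(I)|\le d_{11}(D_1,D_2)/\Delta t$, and the constant in Theorem~\ref{thm:stability_overlap_hwnav} would improve from $3n_{\text{sub}}m$ to $2n_{\text{sub}}m$. Your closing remark on the diagonal-matching case is also consistent with how the main theorem augments diagrams with zero-persistence points. The only nitpick is that your aside about pairing with the projection $\bigl(\tfrac{b+d}{2},\tfrac{b+d}{2}\bigr)$ is not needed as a separate case, since it is already covered by the general Lipschitz argument; but it does no harm.
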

	
	\begin{proof}
		The overlap weights can be computed
		$$
			w = \max\{0, \min\{d, t_2\} - \max\{b, t_1\}\}, \quad
			w' = \max\{0, \min\{d', t_2\} - \max\{b', t_1\}\}.
		$$
		
		Let 
        \vspace{-.2in}
		\begin{align*}
			\bar{d} &= \min\{d, t_2\}, \quad \bar{d}' = \min\{d', t_2\}, \\
			\underline{b} &= \max\{b, t_1\}, \quad \underline{b}' = \max\{b', t_1\}.
		\end{align*}
		
		Then, $w = \max\{0, \bar{d} - \underline{b}\}$ and 
		$w' = \max\{0, \bar{d}' - \underline{b}'\}$.
		
		\textbf{Case 1:} Both intervals have positive overlap, i.e. $w > 0$ and $w' > 0$.
		
		Then,
		$$
			|w - w'| = |(\bar{d} - \underline{b}) - (\bar{d}' - \underline{b}')| = |(\bar{d} - \bar{d}') - (\underline{b} - \underline{b}')| \leq |\bar{d} - \bar{d}'| + |\underline{b} - \underline{b}'|.
		$$
		
		We have,
		\begin{align*}
			|\bar{d} - \bar{d}'| = |\min\{d, t_2\} - \min\{d', t_2\}| \leq |d - d'|, \\
			|\underline{b} - \underline{b}'| = |\max\{b, t_1\} - \max\{b', t_1\}| \leq |b - b'|.
		\end{align*}
		
		Therefore,
		$$|w - w'| \leq |d - d'| + |b - b'| = \|(b,d) - (b',d')\|_1 = \delta.$$
		
		\textbf{Case 2:} $w = 0$ but $w' > 0$ (or vice versa). 
		
		This means $[b, d) \cap I = \emptyset$ but $[b', d') \cap I \neq \emptyset$.
		
		Subcase 2a: $d \leq t_1$ if feature ends before interval starts. Then, $\bar{d} = d$ and $\underline{b} \geq t_1$, so $w = \max\{0, d - t_1\} = 0$.
		
		For $w' > 0$, we need $d' > t_1$. Since $|d - d'| \leq \delta$,
		$d' \leq d + \delta .$ Also, since $|b - b'| \leq \delta$ $b' \leq b + \delta$. Then, for the overlap, we have:
%
		\begin{align*}
		w' = \bar{d}' - \underline{b}' = \min\{d', t_2\} - \max\{b', t_1\} \leq  \min\{d', t_2\} - b' \quad (\text{suppose } b' \geq t_1) \\ \leq d' - b' = (d' - d) + (d - b') \leq (d' - d) + (d - b) \leq (d' - d) + (b' - b) \leq 2\delta.
		\end{align*}
		
		Subcase 2b: $b \geq t_2$ if feature starts after interval ends.
		
		By symmetry: $w' \leq 2\delta$.	Therefore: $|w - w'| = |0 - w'| = w' \leq 2\delta$.\\
		
		\textbf{Case 3:} Both $w = 0$ and $w' = 0$.
		
		Then $|w - w'| = 0 \leq 2\delta$.
		
		Combining all cases, we have $|w - w'| \leq 2\delta$ which is the desired result.
		\end{proof}
	
	\begin{lemma}[Velocity Difference Bound]
		\label{lem:velocity_difference}
		Let $D_1 = \{(b_i^{(1)}, d_i^{(1)})\}_{i=1}^n$ and 
		$D_2 = \{(b_i^{(2)}, d_i^{(2)})\}_{i=1}^n$ be two persistence diagrams 
		with $n$ features each, and let $\gamma$ be an optimal bijection with 
		respect to $d_{11}(D_1, D_2)$.
		
		For any subinterval $I = [t_\ell, t_{\ell+1})$ with width $\Delta t = t_{\ell+1} - t_\ell$:
		$$\left|V^{(1)}(I) - V^{(2)}(I)\right| \leq \frac{2 \cdot d_{11}(D_1, D_2)}{\Delta t},$$
		where $V^{(k)}(I) = \frac{1}{\Delta t} \sum_{i=1}^n w_i^{(k)}$ and 
		$w_i^{(k)} = \text{length}([b_i^{(k)}, d_i^{(k)}) \cap I)$.
	\end{lemma}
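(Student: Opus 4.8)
The plan is to reduce the velocity difference to a sum of per-feature overlap differences, then apply Lemma~\ref{lem:overlap_difference} term by term. Writing out the definitions,
\[
\left|V^{(1)}(I) - V^{(2)}(I)\right| = \left|\frac{1}{\Delta t}\sum_{i=1}^n w_i^{(1)} - \frac{1}{\Delta t}\sum_{i=1}^n w_i^{(2)}\right| = \frac{1}{\Delta t}\left|\sum_{i=1}^n \left(w_i^{(1)} - w_i^{(2)}\right)\right|,
\]
where the features in $D_2$ are indexed according to the optimal bijection $\gamma$, so that the $i$-th feature of $D_1$ is paired with the $i$-th feature of $D_2$. By the triangle inequality this is at most $\frac{1}{\Delta t}\sum_{i=1}^n |w_i^{(1)} - w_i^{(2)}|$.

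Next I would invoke Lemma~\ref{lem:overlap_difference} on each matched pair: setting $\delta_i = \|(b_i^{(1)}, d_i^{(1)}) - (b_i^{(2)}, d_i^{(2)})\|_1$, the lemma gives $|w_i^{(1)} - w_i^{(2)}| \leq 2\delta_i$. Summing over $i$ yields
\[
\left|V^{(1)}(I) - V^{(2)}(I)\right| \leq \frac{1}{\Delta t}\sum_{i=1}^n 2\delta_i = \frac{2}{\Delta t}\sum_{i=1}^n \|(b_i^{(1)}, d_i^{(1)}) - (b_i^{(2)}, d_i^{(2)})\|_1 = \frac{2}{\Delta t}\, d_{11}(D_1, D_2),
\]
where the last equality uses that $\gamma$ is an optimal bijection for the $L^1$ $1$-Wasserstein distance, so $\sum_i \|u_i - \gamma(u_i)\|_1 = d_{11}(D_1, D_2)$. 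This is exactly the claimed bound.

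The one subtlety to handle carefully is the matching: since both diagrams have the same number $n$ of features, the optimal bijection $\gamma$ pairs off-diagonal points of $D_1$ with off-diagonal points of $D_2$ (or with diagonal projections, but with equal cardinality the generic optimal matching is a genuine bijection on the off-diagonal points), and the reindexing of $D_2$ by $\gamma$ is what makes the telescoping sum of overlap differences legitimate. If a point is matched to a diagonal projection, its overlap weight and that of its image still differ by at most $2\delta_i$ since a diagonal point $(c,c)$ has zero-length interval $[c,c)$ and hence zero overlap with any $I$, and Lemma~\ref{lem:overlap_difference} applies with $(b',d') = (c,c)$ as a degenerate feature. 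The main (modest) obstacle is thus bookkeeping rather than mathematical depth: ensuring the bijection is used consistently so that the sum $\sum_i \delta_i$ is genuinely the Wasserstein cost and not an overcount. Everything else is a direct application of the two preceding lemmas and the triangle inequality.
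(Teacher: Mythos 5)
Your proof is correct and follows essentially the same route as the paper's: reindex $D_2$ by the optimal bijection, bound the velocity difference by the sum of per-feature overlap differences via the triangle inequality, apply the Overlap Difference Bound (Lemma~\ref{lem:overlap_difference}) to each matched pair, and sum to recover $d_{11}(D_1,D_2)$. Your closing remark about points matched to diagonal projections is a sensible extra precaution, but the paper defers that bookkeeping to the augmentation step in the main stability theorem rather than handling it inside this lemma.
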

	
	\begin{proof}
		By definition of $d_{11}$ and the optimal matching $\gamma$, we have
		$$d_{11}(D_1, D_2) = \sum_{i=1}^n \left[|b_i^{(1)} - b_{i}^{(2)}| + |d_i^{(1)} - d_{i}^{(2)}|\right],$$
		where we relabel by supposing $\gamma$ is the identity (feature $i$ in $D_1$ matches to 
		feature $i$ in $D_2$).
		
		The velocity difference is
		\begin{align*}
			\left|V^{(1)}(I) - V^{(2)}(I)\right| 
			&= \left|\frac{1}{\Delta t} \sum_{i=1}^n w_i^{(1)} - \frac{1}{\Delta t} \sum_{i=1}^n w_i^{(2)}\right| \\
			&= \frac{1}{\Delta t} \left|\sum_{i=1}^n \left(w_i^{(1)} - w_i^{(2)}\right)\right| \\
			&\leq \frac{1}{\Delta t} \sum_{i=1}^n \left|w_i^{(1)} - w_i^{(2)}\right|.
		\end{align*}
		
		By Lemma~\ref{lem:overlap_difference}, for each feature $i$, we have
		$$\left|w_i^{(1)} - w_i^{(2)}\right| \leq 2 \left(|b_i^{(1)} - b_i^{(2)}| + |d_i^{(1)} - d_i^{(2)}|\right).$$
		
		Therefore,
		\begin{align*}
			\left|V^{(1)}(I) - V^{(2)}(I)\right| 
			&\leq \frac{1}{\Delta t} \sum_{i=1}^n 2\left(|b_i^{(1)} - b_i^{(2)}| + |d_i^{(1)} - d_i^{(2)}|\right) \\
			&= \frac{2}{\Delta t} \sum_{i=1}^n \left(|b_i^{(1)} - b_i^{(2)}| + |d_i^{(1)} - d_i^{(2)}|\right) \\
			&= \frac{2}{\Delta t} \cdot d_{11}(D_1, D_2).
		\end{align*}
		\end{proof}
	
	\begin{lemma}[Total Persistence Stability]
		\label{lem:total_persistence_stability}
		For any two persistence diagrams $D_1$ and $D_2$ with optimal matching $\gamma$ 
		with respect to $d_{11}$, we have
		$$|P(D_1) - P(D_2)| \leq d_{11}(D_1, D_2),$$
		where $P(D) = \sum_{i} (d_i - b_i)$ is the total persistence.
	\end{lemma}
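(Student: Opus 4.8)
The plan is to reduce the claim about total persistences to the feature-wise bound that the optimal matching $\gamma$ already provides, exactly as in the proof of Lemma~\ref{lem:velocity_difference}. First I would observe that $P(D) = \sum_i (d_i - b_i)$ can be rewritten as a signed sum over the diagram, so that matched pairs can be compared termwise. Handling the diagonal points that $\gamma$ may use requires care: if an off-diagonal point $(b,d) \in D_1$ is matched to its orthogonal projection $(\tfrac{b+d}{2}, \tfrac{b+d}{2})$ on the diagonal, that projection contributes $0$ to $P(D_2)$ and persistence $d-b$ to $P(D_1)$, while the $L^1$ cost of that match is $|b - \tfrac{b+d}{2}| + |d - \tfrac{b+d}{2}| = (d-b)$. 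So even in that case the contribution to $|P(D_1) - P(D_2)|$ is bounded by the matching cost for that point.

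The key steps, in order, are: (1) Write $P(D_1) - P(D_2) = \sum_{u \in D_1}\bigl[(d_u - b_u) - (d_{\gamma(u)} - b_{\gamma(u)})\bigr]$, where for a point $u$ sent to the diagonal we interpret $d_{\gamma(u)} - b_{\gamma(u)} = 0$ (and symmetrically for unmatched points of $D_2$). (2) Apply the triangle inequality to get $|P(D_1) - P(D_2)| \le \sum_{u} \bigl|(d_u - d_{\gamma(u)}) - (b_u - b_{\gamma(u)})\bigr| \le \sum_u \bigl(|d_u - d_{\gamma(u)}| + |b_u - b_{\gamma(u)}|\bigr)$. (3) Recognize the right-hand side as precisely $\sum_u \|u - \gamma(u)\|_1 = d_{11}(D_1,D_2)$ since $\gamma$ is the optimal bijection realizing $d_{11}$. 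For the diagonal-matched terms, note $|d_u - d_{\gamma(u)}| + |b_u - b_{\gamma(u)}|$ with $\gamma(u)$ the projection equals $(d_u - b_u)/2 + (d_u - b_u)/2 = d_u - b_u$, which is $\|u - \gamma(u)\|_1$, so step (3) still holds verbatim.

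The main obstacle — really the only subtlety — is bookkeeping the diagonal matches so that the termwise inequality $|(d_u - b_u) - (d_{\gamma(u)} - b_{\gamma(u)})| \le |d_u - d_{\gamma(u)}| + |b_u - b_{\gamma(u)}|$ is legitimate for every term, including projections onto the diagonal and points of $D_2$ not in the image of off-diagonal points of $D_1$. Once one adopts the convention that a diagonal point $(c,c)$ has persistence $0$ and is written with equal birth and death coordinates $c$, the inequality is just the scalar triangle inequality applied coordinatewise and summed, and no case analysis beyond that is needed. I would therefore present the proof as a two-line display sandwiched between the setup sentence about $\gamma$ and the identification of the bound with $d_{11}$, mirroring the style of the earlier lemmas.

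\begin{proof}
Let $\gamma: D_1 \to D_2$ be an optimal bijection realizing $d_{11}(D_1, D_2)$, where each off-diagonal point is matched either to an off-diagonal point of the other diagram or to its orthogonal projection onto the diagonal. Adopt the convention that a diagonal point is written as $(c, c)$, so that its persistence $c - c = 0$; in particular, for a point $u = (b, d)$ matched to its diagonal projection $\bigl(\tfrac{b+d}{2}, \tfrac{b+d}{2}\bigr)$ we have $\|u - \gamma(u)\|_1 = \bigl|b - \tfrac{b+d}{2}\bigr| + \bigl|d - \tfrac{b+d}{2}\bigr| = d - b$, which equals the persistence of $u$. Then
\begin{align*}
|P(D_1) - P(D_2)|
&= \left| \sum_{u \in D_1} \bigl[(d_u - b_u) - (d_{\gamma(u)} - b_{\gamma(u)})\bigr] \right| \\
&\leq \sum_{u \in D_1} \bigl|(d_u - d_{\gamma(u)}) - (b_u - b_{\gamma(u)})\bigr| \\
&\leq \sum_{u \in D_1} \bigl(|d_u - d_{\gamma(u)}| + |b_u - b_{\gamma(u)}|\bigr) \\
&= \sum_{u \in D_1} \|u - \gamma(u)\|_1 = d_{11}(D_1, D_2),
\end{align*}
where the last equality uses that $\gamma$ is the optimal matching for $d_{11}$. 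This is the desired bound.
\end{proof}
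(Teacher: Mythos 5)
Your proof is correct and follows essentially the same route as the paper's: write the difference of total persistences as a termwise sum over the optimal matching and apply the triangle inequality coordinatewise to identify the bound with $d_{11}(D_1,D_2)$. Your explicit bookkeeping of points matched to their diagonal projections (where the $L^1$ matching cost equals the lost persistence $d-b$) is a slightly more careful treatment than the paper's, which simply relabels $\gamma$ as the identity on equal-size diagrams, but the argument is the same.
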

	
	\begin{proof}
		We have using the optimal bijection $\gamma$ (relabeled as identity)
		\begin{align*}
			|P(D_1) - P(D_2)| 
			&= \left|\sum_{i=1}^n (d_i^{(1)} - b_i^{(1)}) - \sum_{i=1}^n (d_i^{(2)} - b_i^{(2)})\right| \\
			&= \left|\sum_{i=1}^n \left[(d_i^{(1)} - d_i^{(2)}) - (b_i^{(1)} - b_i^{(2)})\right]\right| \\
			&\leq \sum_{i=1}^n \left|d_i^{(1)} - d_i^{(2)}\right| + \sum_{i=1}^n \left|b_i^{(1)} - b_i^{(2)}\right| \\
			&= \sum_{i=1}^n \left(|b_i^{(1)} - b_i^{(2)}| + |d_i^{(1)} - d_i^{(2)}|\right) \\
			&= d_{11}(D_1, D_2).
		\end{align*}
		\end{proof}

	\subsection{Main Stability Theorem}
	We now state and prove the main stability result for OW-HNPV.
	
		
    

	


    \begin{theorem}[Stability of OW-HNPV - $L^\infty$ norm]
    \label{thm:stability_overlap_hwnav}
    Let $D_1$ and $D_2$ be two persistence diagrams of the same homological dimension with positive total persistence $P(D_1), P(D_2) > 0$. The diagrams may have different numbers of off-diagonal points. Let $H_1 = (H_{1,1}, \ldots, H_{1,m})$ and $H_2 = (H_{2,1}, \ldots, H_{2,m})$ be their respective OW-HNPV vectors, computed with $m$ main intervals and $n_{\text{sub}}$ subintervals per main interval over the filtration range $[\alpha, \beta]$. Then,
    $$\|H_1 - H_2\|_\infty \leq \frac{3 n_{\text{sub}} m}{(\beta - \alpha) \cdot \min\{P(D_1), P(D_2)\}} \cdot d_{11}(D_1, D_2).$$
    \end{theorem}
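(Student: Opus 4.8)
The plan is to reduce the general case (diagrams with possibly different feature counts) to the equal-count case already handled by the key lemmas, by padding the smaller diagram with diagonal points. Adding a point $(x,x)$ on the diagonal contributes zero persistence and zero overlap with any interval, so it changes neither $P(D)$ nor any $V^{j,\ell}$; hence padding both diagrams to a common size $n$ and using an optimal bijection $\gamma$ realizing $d_{11}(D_1,D_2)$ leaves $H_1, H_2$ untouched while letting us invoke Lemmas \ref{lem:overlap_difference}--\ref{lem:total_persistence_stability} verbatim. After relabeling $\gamma$ as the identity, I write the componentwise difference and split it via the standard ``add and subtract'' trick for a quotient: with $A_k = \frac{1}{n_{\text{sub}}}\sum_\ell V^{(k)}(I_\ell^j)$ (so $H^j_k = A_k/P(D_k)$),
\begin{equation*}
|H_1^j - H_2^j| = \left|\frac{A_1}{P(D_1)} - \frac{A_2}{P(D_2)}\right| \leq \frac{|A_1 - A_2|}{P(D_1)} + |A_2|\left|\frac{1}{P(D_1)} - \frac{1}{P(D_2)}\right|.
\end{equation*}

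Next I bound each of the two terms. For the first, $|A_1 - A_2| \leq \frac{1}{n_{\text{sub}}}\sum_{\ell=1}^{n_{\text{sub}}} |V^{(1)}(I_\ell^j) - V^{(2)}(I_\ell^j)|$, and each summand is at most $\frac{2\, d_{11}(D_1,D_2)}{\Delta t_\ell^j}$ by Lemma \ref{lem:velocity_difference}. Since each main interval has width at most $\beta-\alpha$ split into $n_{\text{sub}}$ equal pieces, $\Delta t_\ell^j \geq (\beta-\alpha)/(m\, n_{\text{sub}})$ in the worst case (a single main interval of full width would give the smallest subintervals; more carefully, $\Delta t_\ell^j = (s_{j+1}-s_j)/n_{\text{sub}}$ and $\sum_j (s_{j+1}-s_j) = \beta-\alpha$, so the smallest possible $\Delta t_\ell^j$ over all valid partitions is bounded below by forcing $(\beta-\alpha)/(m n_{\text{sub}})$ only under the uniform partition; in general I will just use the stated hypothesis that subintervals are the uniform subdivision, giving $\Delta t_\ell^j = (\beta-\alpha)/(m n_{\text{sub}})$). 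This yields $|A_1 - A_2| \leq \frac{2 m n_{\text{sub}}}{\beta-\alpha} d_{11}(D_1,D_2)$. For the second term, by Lemma \ref{lem:total_overlap} every overlap satisfies $\sum_\ell w_i^\ell \le d_i - b_i$, so summing the $V$'s telescopes to show $A_2 \le \frac{1}{n_{\text{sub}}}\cdot\frac{1}{\Delta t}\cdot(\text{total persistence in interval } j) \le \frac{m}{\beta-\alpha} P(D_2)$ roughly; more precisely $A_2 = \frac{1}{n_{\text{sub}}}\sum_\ell \frac{1}{\Delta t}\sum_i w_i^{j,\ell} \le \frac{1}{n_{\text{sub}} \Delta t}\sum_i (d_i-b_i) = \frac{m n_{\text{sub}}}{(\beta-\alpha) n_{\text{sub}}} P(D_2) = \frac{m}{\beta-\alpha}P(D_2)$, wait — I need the sum over $\ell$ of overlaps within main interval $j$ to be at most the persistence, which Lemma \ref{lem:total_overlap} gives, so $\sum_\ell \sum_i w_i^{j,\ell} \le \sum_i(d_i-b_i) = P(D_2)$, hence $A_2 \le \frac{1}{n_{\text{sub}}\Delta t} P(D_2) = \frac{m}{\beta-\alpha}P(D_2)$. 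Combined with Lemma \ref{lem:total_persistence_stability} and $\left|\frac{1}{P(D_1)} - \frac{1}{P(D_2)}\right| = \frac{|P(D_1)-P(D_2)|}{P(D_1)P(D_2)} \le \frac{d_{11}(D_1,D_2)}{P(D_1)P(D_2)}$, the second term is at most $\frac{m}{\beta-\alpha}P(D_2)\cdot\frac{d_{11}(D_1,D_2)}{P(D_1)P(D_2)} = \frac{m\, d_{11}(D_1,D_2)}{(\beta-\alpha)P(D_1)}$.

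Adding the two bounds and taking $\min\{P(D_1),P(D_2)\}$ in the denominator gives
\begin{equation*}
|H_1^j - H_2^j| \le \frac{2m n_{\text{sub}}}{(\beta-\alpha)P(D_1)} d_{11} + \frac{m}{(\beta-\alpha)P(D_1)} d_{11} \le \frac{(2 n_{\text{sub}} + 1) m}{(\beta-\alpha)\min\{P(D_1),P(D_2)\}} d_{11}(D_1,D_2),
\end{equation*}
and since $2n_{\text{sub}} + 1 \le 3 n_{\text{sub}}$ for $n_{\text{sub}} \ge 1$, this is bounded by $\frac{3 n_{\text{sub}} m}{(\beta-\alpha)\min\{P(D_1),P(D_2)\}} d_{11}(D_1,D_2)$; taking the max over $j$ finishes the $L^\infty$ bound. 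The main obstacle I anticipate is bookkeeping the constant cleanly — specifically making sure the quotient-splitting uses $P(D_1)$ consistently (or symmetrically swapping roles to land on the $\min$), and confirming that the bound $|A_2| \le \frac{m}{\beta-\alpha}P(D_2)$ is the right way to control the second term rather than something weaker; a secondary subtlety is justifying rigorously that diagonal padding leaves the OW-HNPV invariant, which requires noting that $b_i = d_i$ forces $d_i - b_i = 0$ and $w_i^{j,\ell} = \max\{0, \min\{d_i, t_{\ell+1}^j\} - \max\{b_i, t_\ell^j\}\} = 0$ whenever the degenerate point's projection coincides with its own coordinate, so both numerator and the total persistence are unaffected.
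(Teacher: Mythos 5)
Your proposal is correct and follows essentially the same route as the paper: pad the smaller diagram with zero-persistence diagonal points, relabel the optimal matching as the identity, split the componentwise difference by adding and subtracting $A_2/P(D_1)$, bound the first piece with the velocity-difference lemma and the second with the total-overlap and total-persistence lemmas. The only difference is that your bound $A_2 \le \frac{m}{\beta-\alpha}P(D_2)$ is a factor of $n_{\text{sub}}$ sharper than the paper's estimate for the second term, yielding the intermediate constant $(2n_{\text{sub}}+1)m$ before you relax it to the stated $3n_{\text{sub}}m$.
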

    
	\begin{proof}

        Let $n_1 = |D_1|$ and $n_2 = |D_2|$ denote the number of off-diagonal points 
        in each diagram. Without loss of generality, assume $n_1 \leq n_2$ and set 
        $n = n_2$. We augment $D_1$ by adding $n_2 - n_1$ diagonal points, which are features 
        with birth time equal to death time. These diagonal points have zero 
        persistence and contribute zero to all overlap weights and velocities.\\
        
        Let $\gamma: D_1 \to D_2$ be an optimal bijection with respect to $d_{11}(D_1, D_2)$ on the augmented diagrams. Under this bijection, we can relabel so that $\gamma$ is the identity mapping. Note that for diagonal points added to $D_1$, we have $(b_i^{(1)}, d_i^{(1)}) = (b_i^{(1)}, b_i^{(1)})$, 
        so $d_i^{(1)} - b_i^{(1)} = 0$ and $w_i^{1,j,\ell} = 0$ for all $j, \ell$. Then, the total persistence $P(D_1) = \sum_{i=1}^{n_1} (d_i^{(1)} - b_i^{(1)})$ 
        includes only the original off-diagonal points, as diagonal points contribute zero.  Similarly, all sums involving overlap weights automatically exclude 
        diagonal points since they have zero overlap with any interval.\\

        Therefore, For each main interval $j \in \{1, \ldots, m\}$, we have
        $$H^{1,j} = \frac{1}{P(D_1)} \cdot \frac{1}{n_{\text{sub}}} \sum_{\ell=1}^{n_{\text{sub}}} V^{1,j,\ell},$$
        where
        $$V^{1,j,\ell} = \frac{1}{\Delta t_j^\ell} \sum_{i=1}^{n} w_i^{1,j,\ell}.$$
        
        Note that the sum over $i = 1, \ldots, n$ includes augmented diagonal points, 
        but these contribute zero to the sum. The rest of the proof proceeds identically 
        to the case $n_1 = n_2$.\\

		Step 1: We consider the bound for a Single Component. We bound $|H^{1,j} - H^{2,j}|$ by adding and subtracting $\frac{1}{P(D_1)}$ 
		in the second term:
		\begin{align*}
			|H^{1,j} - H^{2,j}| 
			&= \left|\frac{1}{P(D_1)} \cdot \frac{1}{n_{\text{sub}}} \sum_{\ell=1}^{n_{\text{sub}}} V^{1,j,\ell} 
			- \frac{1}{P(D_2)} \cdot \frac{1}{n_{\text{sub}}} \sum_{\ell=1}^{n_{\text{sub}}} V^{2,j,\ell}\right| \\
			&\leq \underbrace{\left|\frac{1}{P(D_1)} \cdot \frac{1}{n_{\text{sub}}} 
				\sum_{\ell=1}^{n_{\text{sub}}} V^{1,j,\ell} 
				- \frac{1}{P(D_1)} \cdot \frac{1}{n_{\text{sub}}} \sum_{\ell=1}^{n_{\text{sub}}} V^{2,j,\ell}\right|}_{\text{Term I}} \\
			&\quad + \underbrace{\left|\frac{1}{P(D_1)} \cdot \frac{1}{n_{\text{sub}}} 
				\sum_{\ell=1}^{n_{\text{sub}}} V^{2,j,\ell} 
				- \frac{1}{P(D_2)} \cdot \frac{1}{n_{\text{sub}}} \sum_{\ell=1}^{n_{\text{sub}}} V^{2,j,\ell}\right|}_{\text{Term II}}.
		\end{align*}
		
		Now, we work on bounding the terms. For term I, we have
		
		\begin{align*}
			\text{Term I} 
			&= \frac{1}{P(D_1)} \cdot \frac{1}{n_{\text{sub}}} 
			\left|\sum_{\ell=1}^{n_{\text{sub}}} \left(V^{1,j,\ell} - V^{2,j,\ell}\right)\right| \\
			&\leq \frac{1}{P(D_1)} \cdot \frac{1}{n_{\text{sub}}} 
			\sum_{\ell=1}^{n_{\text{sub}}} \left|V^{1,j,\ell} - V^{2,j,\ell}\right|.
		\end{align*}
		
		By Lemma~\ref{lem:velocity_difference}, for each subinterval $\ell$, we have
		$$\left|V^{1,j,\ell} - V^{2,j,\ell}\right| \leq \frac{2 \cdot d_{11}(D_1, D_2)}{\Delta t_\ell^j}.$$
		
		Therefore,
		\begin{align*}
			\text{Term I} 
			&\leq \frac{1}{P(D_1)} \cdot \frac{1}{n_{\text{sub}}} 
			\sum_{\ell=1}^{n_{\text{sub}}} \frac{2 \cdot d_{11}(D_1, D_2)}{\Delta t_\ell^j} \\
			&= \frac{2 \cdot d_{11}(D_1, D_2)}{P(D_1) \cdot n_{\text{sub}}} 
			\sum_{\ell=1}^{n_{\text{sub}}} \frac{1}{\Delta t_\ell^j}.
		\end{align*}
		
		For subintervals, we have $\Delta t_\ell^j = (t_{j+1} - t_j)/n_{\text{sub}}$, 
		where $t_{j+1} - t_j = (\beta - \alpha)/m$. Thus,
		$$\sum_{\ell=1}^{n_{\text{sub}}} \frac{1}{\Delta t_\ell^j} 
		= n_{\text{sub}} \cdot \frac{n_{\text{sub}} \cdot m}{\beta - \alpha} 
		= \frac{n_{\text{sub}}^2 \cdot m}{\beta - \alpha}.$$
		
		Therefore,
		$$\text{Term I} \leq \frac{2 \cdot d_{11}(D_1, D_2)}{P(D_1) \cdot n_{\text{sub}}} 
		\cdot \frac{n_{\text{sub}}^2 \cdot m}{\beta - \alpha} 
		= \frac{2n_{\text{sub}} m \cdot d_{11}(D_1, D_2)}{P(D_1) (\beta - \alpha)}.$$
		
		For bounding Term II, we have
        
		\begin{align*}
			\text{Term II} 
			&= \left|\frac{1}{P(D_1)} - \frac{1}{P(D_2)}\right| 
			\cdot \frac{1}{n_{\text{sub}}} \sum_{\ell=1}^{n_{\text{sub}}} V^{2,j,\ell} \\
			&= \frac{|P(D_1) - P(D_2)|}{P(D_1) \cdot P(D_2)} 
			\cdot \frac{1}{n_{\text{sub}}} \sum_{\ell=1}^{n_{\text{sub}}} V^{2,j,\ell}.
		\end{align*}
		
		By Lemma~\ref{lem:total_persistence_stability}, we have
		$$|P(D_1) - P(D_2)| \leq d_{11}(D_1, D_2).$$
		
		For the velocity sum, note that
		$$V^{2,j,\ell} = \frac{1}{\Delta t_\ell^j} \sum_{i=1}^n w_i^{2,j,\ell}.$$

        By Lemma \ref{lem:total_overlap}, each feature $i$ satisfies
        $$\sum_{\ell=1}^{n_{\text{sub}}} w_i^{2,j,\ell} \leq d_i^{(2)} - b_i^{(2)},$$
        with equality when the feature's entire lifespan $[b_i^{(2)}, d_i^{(2)})$ 
        lies within the main interval $[s_j, s_{j+1})$. Therefore, summing over all 
        features (the "worst case" where all features are fully contained in 
        interval $j$):
        $$\sum_{\ell=1}^{n_{\text{sub}}} \sum_{i=1}^n w_i^{2,j,\ell} 
          \leq \sum_{i=1}^n (d_i^{(2)} - b_i^{(2)}) = P(D_2).$$

        Then, we have
		\begin{align*}
			\sum_{\ell=1}^{n_{\text{sub}}} V^{2,j,\ell} 
			&= \sum_{\ell=1}^{n_{\text{sub}}} \frac{1}{\Delta t_\ell^j} \sum_{i=1}^n w_i^{2,j,\ell} \\
			&\leq P(D_2) \cdot \sum_{\ell=1}^{n_{\text{sub}}} \frac{1}{\Delta t_\ell^j} \\
			&= P(D_2) \cdot \frac{n_{\text{sub}}^2 m}{\beta - \alpha}.
		\end{align*}

		Therefore,
		\begin{align*}
			\text{Term II} 
			&\leq \frac{d_{11}(D_1, D_2)}{P(D_1) \cdot P(D_2)} 
			\cdot \frac{1}{n_{\text{sub}}} \cdot P(D_2) \cdot \frac{n_{\text{sub}}^2 m}{\beta - \alpha} \\
			&= \frac{d_{11}(D_1, D_2)}{P(D_1)} \cdot \frac{n_{\text{sub}} m}{\beta - \alpha}.
		\end{align*}
		
		Now, we combine Terms I and II. We have
		\begin{align*}
			|H^{1,j} - H^{2,j}| 
			&\leq \text{Term I} + \text{Term II} \\
			&\leq \frac{2n_{\text{sub}} m \cdot d_{11}(D_1, D_2)}{P(D_1)(\beta - \alpha)} 
			+ \frac{n_{\text{sub}} m \cdot d_{11}(D_1, D_2)}{P(D_1)(\beta - \alpha)} \\
			&= \frac{3n_{\text{sub}} m \cdot d_{11}(D_1, D_2)}{P(D_1)(\beta - \alpha)}.
		\end{align*}
		
		By swapping roles of $D_1$ and $D_2$, we have
		$$|H^{1,j} - H^{2,j}| \leq \frac{3n_{\text{sub}} m \cdot d_{11}(D_1, D_2)}{P(D_2)(\beta - \alpha)}.$$
		
		By taking the minimum, we have
		$$|H^{1,j} - H^{2,j}| \leq \frac{3n_{\text{sub}} m \cdot d_{11}(D_1, D_2)}{(\beta - \alpha) \cdot \min\{P(D_1), P(D_2)\}}.$$
		
		Step 2: We consider the $L^\infty$ Norm on Output.	The $L^\infty$ norm on $\mathbb{R}^m$ is
		\begin{align*}
			\|\mathbf{H}^1 - \mathbf{H}^2\|_\infty 
			&= \max_{j=1,\ldots,m} |H^{1,j} - H^{2,j}| \\
			&\leq \frac{3n_{\text{sub}} m}{(\beta - \alpha) \cdot \min\{P(D_1), P(D_2)\}} 
			\cdot d_{11}(D_1, D_2).
		\end{align*}
		
		This completes the proof.
		\end{proof}

    \section{Experimental Results}

	Our experiments demonstrate the effectiveness of velocity-based topological summaries, particularly OW-HNPV, for anomalous price prediction in cryptocurrency transaction networks. We conduct a comprehensive sensitivity analysis on the Ethereum transaction network spanning May 2017 to May 2018, comprising approximately 10 million transactions across 31 tokens.\\ 
	
	Our experimental setup works as follows which is similar to setup in \cite{Islambekov2024}. For each day, we build a transaction network using the 250 most active nodes (\texttt{topRank} = 250) so the computation stays manageable while still capturing the key market participants. From each daily network, we compute persistent homology using a lower-star filtration on graphs where node weights are defined by the average transaction amounts. We extract topological features in dimensions $k \in \{0, 1, 2\}$, which correspond to connected components, loops, and voids, respectively.\\
	
	Our goal is to predict large price jumps-days when the return moves more than $\pm 5\%$. We test how well the methods can predict these events across several time horizons $h \in \{1, 2, \ldots, 7\}$ days ahead. The value of $h$ tells us how far in advance we try to forecast the anomaly. Using multiple horizons helps us see whether topological features can provide early warning signals at different time scales.\\

	We compare six topological summary methods: three velocity-based approaches (HNAV, HWNAV, OW-HNPV) that capture the rate of topological change, and three established static summaries (VAB, PL, PI) that characterize persistence diagram structure at individual time points. All summaries are standardized to produce 30-dimensional feature vectors to ensure fair comparison. Performance is measured using AUC gains relative to a baseline model containing only graph-theoretic features (degree centrality, closeness centrality, betweenness centrality, and clustering coefficient).\\
	
	Three model configurations are evaluated such that Model M1 uses only baseline graph features; Model M2 augments the baseline with dimension-0 topological features and Model M3 includes baseline features plus both dimension-0 and dimension-1 topological features. Random forest classifiers with 500 trees are trained on each configuration, with performance assessed through 10-fold cross-validation repeated 10 times to ensure robustness. To analysis of the hierarchical structure parameters, for velocity-based methods, we vary the number of subintervals per main interval $n_{\text{sub}} \in \{1, 2, 3, 5, 10\}$ while fixing the number of main intervals at $m = 30$. This helps us examine how the choice of resolution affects the velocity estimates and the overall prediction results. Below we summarize the main findings.\\
	
	The Figure \ref{fig:sensitivity_main} shows AUC gains (\%) for all six methods across different values of $n_{\text{sub}}$ (number of subintervals per main interval) with fixed $m = 30$ main intervals. Results are shown for: (rows) Model configurations M2 (baseline + dimension-0 features) and M3 (baseline + dimensions 0 and 1); (columns) Prediction horizons $h \in \{1, 2, \ldots, 7\}$ days. Solid lines represent velocity methods (HNAV, HWNAV, OW-HNPV) that vary with $n_{\text{sub}}$, while dashed lines represent fixed methods (VAB, PL, PI) with constant performance.	
	\begin{figure}[H]
		\centering
		\includegraphics[width=\textwidth]{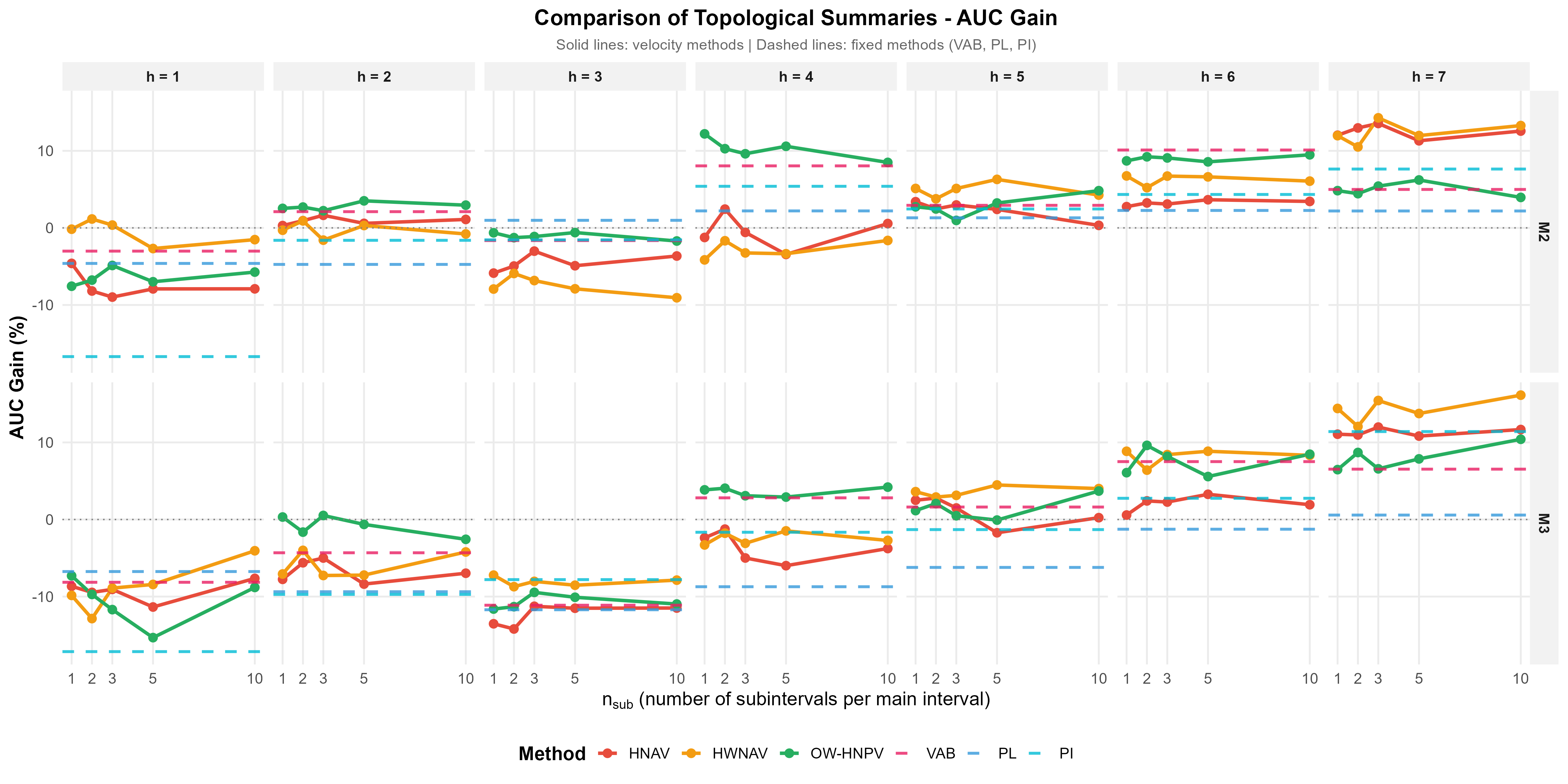}
		\caption{Sensitivity of Topological Summaries to Resolution Parameter. }
		\label{fig:sensitivity_main}
	\end{figure}

	Several clear patterns emerge from the comparison of topological summaries in Figure \ref{fig:sensitivity_main}. First, OW-HNPV (green) provides the most consistent and stable improvements over the fixed baselines, especially for intermediate and longer prediction horizons, and maintains competitive performance even when gains are modest. Second, AUC gains generally improve with increasing prediction horizon for $h \in {4,5,6,7}$ the velocity-based methods achieve noticeably higher gains than in the short-range settings $h \in {1,2,3}$. Third, Model M3 tends to outperform M2, indicating that combining both dimension-0 and dimension-1 topological features enhances predictive accuracy. Fourth, the velocity-based summaries exhibit sensitivity to the subinterval parameter $n_{\text{sub}}$, with performance often improving as $n_{\text{sub}}$ increases, stabilizing in the range $[3,10]$. Finally, the fixed summaries (VAB, PL, PI) appear as horizontal reference lines, and VAB typically provides the strongest static baseline, especially for larger values of $h$.\\

	To closer look at Model 3, we consider the optimal performance heatmap. In Figure \ref{fig:heatmap}, the heatmap displays the AUC gains (\%) for each method at their optimal $n_{\text{sub}}$ values across all prediction horizons. For velocity methods (HNAV, HWNAV, OW-HNPV), we selected the best-performing $n_{\text{sub}}$ for each horizon. Fixed methods (VAB, PL, PI) maintain constant values across $n_{\text{sub}}$.
	
	\begin{figure}[H]
		\centering
		\includegraphics[width=0.85\textwidth]{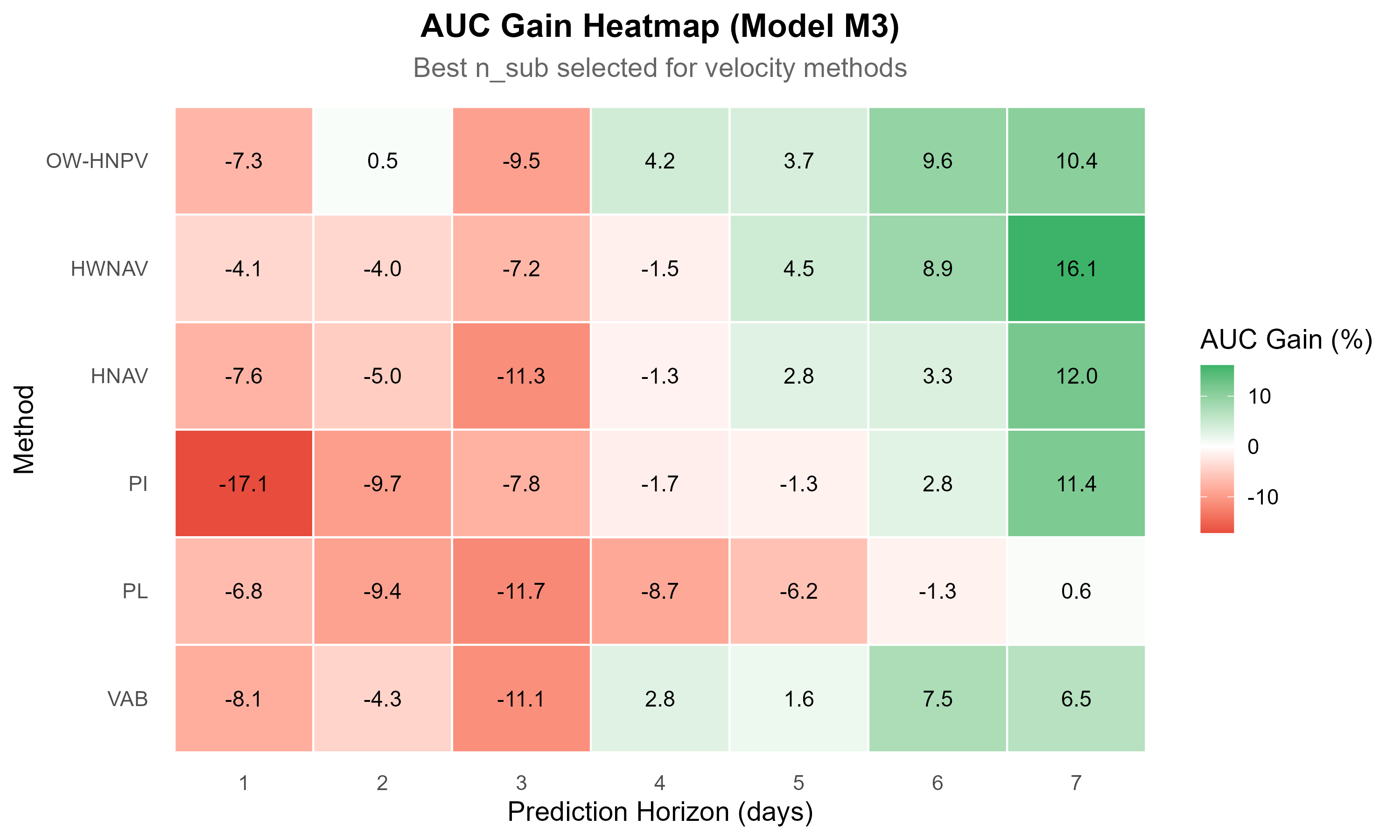}
		\caption{AUC Gain Heatmap for Best Configuration (Model M3). }
		\label{fig:heatmap}
	\end{figure}
	
	This heatmap for Model M3 reveals several notable trends. First, OW-HNPV achieves some of the strongest overall gains, reaching up to \textbf{10.4\%} improvement at a prediction horizon of $h = 7$ days. Similarly, HWNAV exhibits excellent long-range performance, with a peak gain of 16.1\% at $h = 7$, highlighting the benefits of persistence-based weighting. In contrast, short-term prediction remains challenging for horizons $h \in {1,2,3}$ all methods yield negative or only marginal improvements, indicating limited informative signal in early forecasts. Consistent with this, medium to long-term horizons ($h \geq 4$) show the most reliable positive gains across velocity-based and baseline methods. Among the fixed summaries, PI performs poorest for short-term predictions, with losses as large as –17.1\% at $h = 1$, while PL shows consistently negative gains across nearly all horizons, suggesting that persistence landscapes may not capture the temporal structure critical for this anomaly-detection task. Finally, VAB provides moderate and stable performance, becoming beneficial for $h \geq 4$ and reaching gains up to 7.5\% at $h = 6$, in line with its previously reported strengths in static topological summarization.\\

	In the following figure, we used bar graph to compare all six methods at the finest resolution tested ($n_{\text{sub}} = 10$) for three representative prediction horizons $h \in \{2, 4, 7\}$ days.
	
	\begin{figure}[H]
		\centering
		\includegraphics[width=0.9\textwidth]{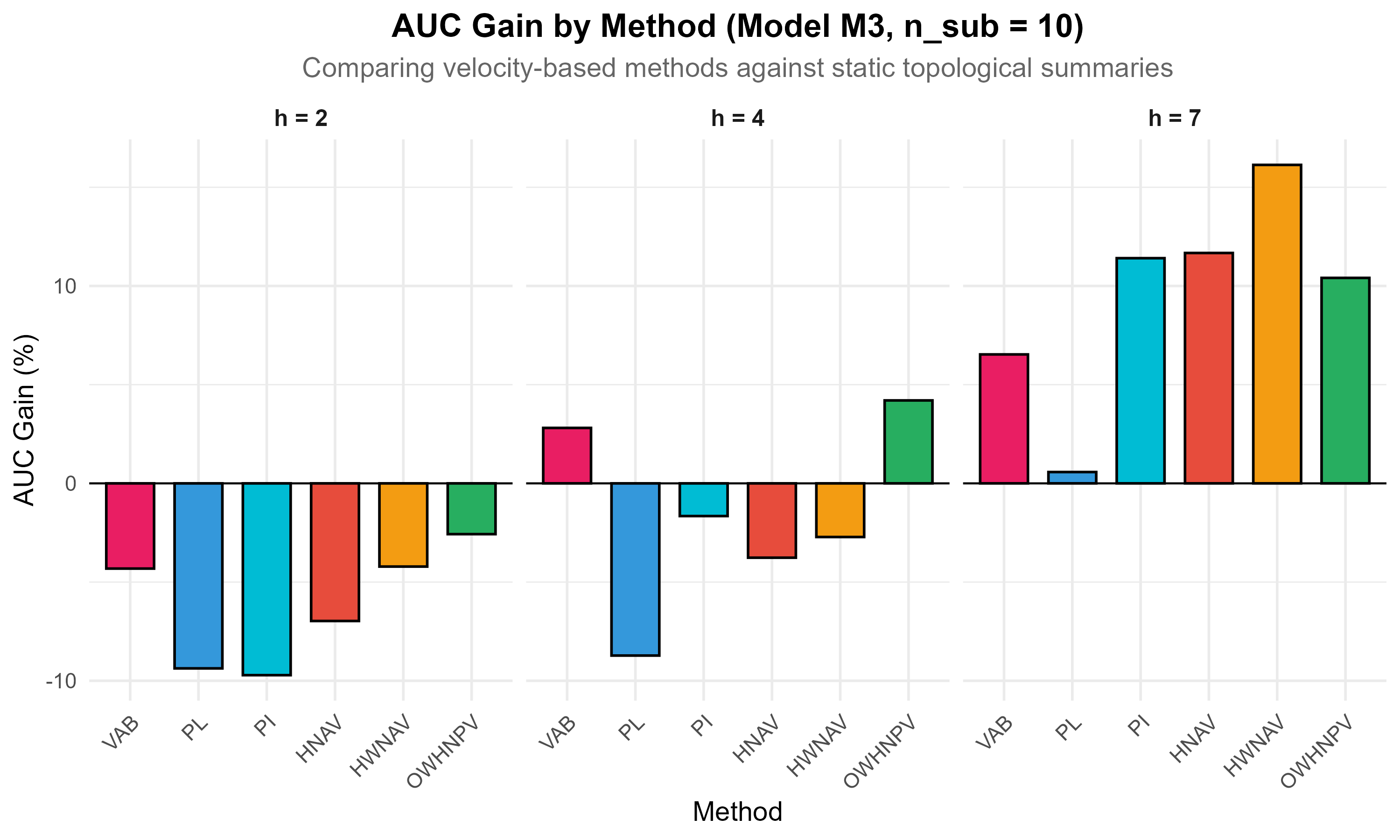}
		\caption{AUC Gains by Method at High Resolution ($n_{\text{sub}} = 10$, Model M3). }
		\label{fig:barplot}
	\end{figure}
	
	In Figure \ref{fig:barplot}, for the short-term horizon ($h = 2$), all methods exhibit negative AUC gains, reflecting the difficulty of very short-range anomaly prediction. Among them, OW-HNPV performs best, with a relatively small loss of about $-3\%$. HWNAV and VAB follow with similar declines around $-4\%$, while HNAV shows a larger decrease near $-8\%$. The static summaries PI and PL perform the worst, with losses approaching $-10\%$. This shows that velocity-based summaries do not provide an advantage at very short horizons, although OW-HNPV is still the most stable method.\\
	
	At the medium-term horizon ($h = 4$), results begin to improve, marking a transition point where temporal topological information becomes more predictive. OW-HNPV achieves the strongest gain at about $4\%$, while VAB also produces a small positive improvement. In contrast, PI and PL continue to show negative performance.\\
	
	For the long-term horizon ($h = 7$), nearly all methods show strong positive gains, demonstrating that longer prediction horizons benefit most from temporal topological structure. HWNAV is the top performer with a gain exceeding $15\%$, followed by HNAV at approximately $12\%$, PI at around $11\%$, and OW-HNPV at about $10\%$. Even VAB achieves a solid positive gain of roughly $6\%$, whereas PL remains near zero. These results highlight that velocity-based topological summaries, particularly HWNAV and OW-HNPV, excel when forecasting further into the future.

	\section{Conclusion}
	\label{sec:conclusion}
	
	This paper introduced the Overlap-Weighted Hierarchical Normalized Persistence Velocity (OW-HNPV) method, a novel topological data analysis approach for detecting anomalies in time-varying networks. Our theory shows that the method is stable and small changes in the input persistence diagrams lead to only small, bounded changes in the OW-HNPV summary. The size of this bound depends on the Wasserstein distance between diagrams and the chosen resolution levels.\\
	
	Our experiments on Ethereum transaction networks (May 2017--May 2018) show that velocity-based topological summaries are highly effective for predicting large price movements in cryptocurrency markets. HWNAV performs best for long-term predictions, reaching a 16.1\% gain at a seven-day horizon due to its emphasis on persistent topological features. OW-HNPV is the most stable and consistent method across all horizons. It performs especially well for medium-term forecasts, achieving a 4.2\% gain at the four-day horizon, and remains robust as the prediction window changes, making it particularly useful in practical applications. HNAV also achieves strong long-term results but exhibits more variability at shorter horizons, indicating a higher sensitivity to the prediction timeframe.\\
	
	VAB provides a consistent benchmark with modest gains for longer horizons. PI performs poorly in the short term but improves to an 11.4\% gain at seven days, suggesting that image-based features capture longer-range patterns. In contrast, PL underperforms across all horizons. Our results extend earlier VAB-based work by demonstrating how much additional information is gained by modeling the rate of topological change. For horizons $h \ge 4$, OW-HNPV outperforms VAB by up to 38\%, and HWNAV achieves more than a 100\% relative improvement, highlighting the importance of dynamic topological features for medium- and long-range forecasting. We also find that the hierarchical resolution parameter $n_{\text{sub}}$ significantly impacts performance, with effective values typically in the range $[3,10]$. Among the velocity-based methods, OW-HNPV is the least sensitive to this choice, confirming that overlap weighting improves stability. We should include that higher-dimensional features are important becasue Model M3, which incorporates both dimension-0 and dimension-1 features, consistently outperforms Model M2, showing that loop structures in transaction networks provide valuable predictive information.\\

	In conclusion, OW-HNPV offers an important step forward for topological analysis of temporal networks. It provides both theoretical stability and strong practical performance. By focusing on how topological structure changes over time rather than on static snapshots, our velocity-based approach opens new opportunities for studying complex systems where structural evolution is key to detecting and predicting anomalies.

    \section*{Acknowledgments}
    This research was developed by the author while teaching Topics in Graph Machine Learning (STAT 391) at the University of Evansville during Fall 2025. The author used Claude (Anthropic) to assist with manuscript editing and code debugging. All intellectual contributions are the author's own work.


\begin{thebibliography}{99}
		
		\bibitem{Abay2019}
		N.~C. Abay, C.~G. Akcora, Y.~R. Gel, M. Kantarcioglu, U.~D. Islambekov, Y. Tian and B. Thuraisingham,
		Chainnet: Learning on blockchain graphs with topological features,
		{\it 2019 IEEE International Conference on Data Mining (ICDM)}, IEEE, pp.~946--951, 2019.
		
		\bibitem{Adams2017}
		H. Adams, T. Emerson, M. Kirby, R. Neville, C. Peterson, P. Shipman, S. Chepushtanova, E. Hanson, F. Motta and L. Ziegelmeier,
		Persistence images: A stable vector representation of persistent homology,
		{\it The Journal of Machine Learning Research}, vol.~18, Paper No.~8, 35~pp., 2017.
		
		
		\bibitem{Bubenik2015}
		P. Bubenik,
		Statistical topological data analysis using persistence landscapes,
		{\it The Journal of Machine Learning Research}, vol.~16, pp.~77--102, 2015.
		
		\bibitem{Carlsson2009}
		G. Carlsson,
		Topology and data,
		{\it Bulletin of the American Mathematical Society}, vol.~46, pp.~255--308, 2009.
		
		\bibitem{ChazalMichel2021}
		F. Chazal and B. Michel,
		An introduction to topological data analysis: Fundamental and practical aspects for data scientists,
		{\it Frontiers in Artificial Intelligence}, vol.~4, 2021.
		
		\bibitem{Chung2022}
		Y.-M. Chung and A. Lawson,
		Persistence curves: A canonical framework for summarizing persistence diagrams,
		{\it Advances in Computational Mathematics}, vol.~48, Paper No.~6, 42~pp., 2022.
		
		\bibitem{EdelsbrunnerHarer2008}
		H. Edelsbrunner and J. Harer,
		Persistent homology – a survey,
		in {\it Surveys on Discrete and Computational Geometry: Twenty Years Later}, Vol.~453, American Mathematical Society, Providence, RI, 2008, pp.~257--282.
		
		\bibitem{EdelsbrunnerHarer2010}
		H. Edelsbrunner and J.~L. Harer,
		{\it Computational Topology: An Introduction},
		American Mathematical Society, Providence, RI, 2010.
		
		\bibitem{Fan2022}
		A. Fan and S.-D. Liang,
		Topological invariant of velocity field in quantum systems,
		{\it Annalen der Physik}, vol.~534, 2100443, 2022.
		doi:10.1002/andp.202100443.
		
		\bibitem{Gu2025}
		Y. Gu, J. Liu, K.~H. Lee, C. Li, L. Lu, J. Moline, R. Guan and J.~D. Welch,
		Topological velocity inference from spatial transcriptomic data,
		{\it Nature Biotechnology}, 2025.
		doi:10.1038/s41587-025-02688-8.
		
		\bibitem{Hamilton2020}
		W.~L. Hamilton,
		{\it Graph Representation Learning},
		Springer, 1st edition, 2020.
		
		\bibitem{Hiraoka2016}
		Y. Hiraoka, T. Nakamura, A. Hirata, E. Escolar, K. Matsue and Y. Nishiura,
		Hierarchical structures of amorphous solids characterized by persistent homology,
		{\it Proceedings of the National Academy of Sciences}, vol.~113, no.~26, pp.~7035--7040, 2016.
		
		\bibitem{Islambekov2024}
		U. Islambekov, H. Pathirana, O. Khormali, C. Akçora, and E. Smirnova,
		A topological approach for capturing high-order interactions in graph data with applications to anomaly detection in time-varying cryptocurrency transaction graphs,
		{\it Foundations of Data Science}, {\bf 6} (2024), no.~4, 492--513.
		
		\bibitem{KerberMorozovNigmetov2017}
		M. Kerber, D. Morozov, and A. Nigmetov,
		Geometry helps to compare persistence diagrams,
		{\it ACM Journal of Experimental Algorithmics}, {\bf 22} (2017), Art.~1.4, 20~pp.
		
		\bibitem{Kriz2018}
		D. Kriz, A. Fasy, J. Kim, W. Mio and C. McCormick,
		Analyzing molecular structures using persistent homology,
		{\it Journal of Computational Chemistry}, vol.~39, no.~22, pp.~1141--1151, 2018.
		
		\bibitem{Lam2018}
		K.-Y. Lam, B. Wang and E. Miller,
		Topological data analysis of road network traffic states,
		{\it Transportation Research Record}, vol.~2672, no.~45, pp.~20--31, 2018.
		
		\bibitem{Li2020}
		Y. Li, U. Islambekov, C. Akcora, E. Smirnova, Y.~R. Gel and M. Kantarcioglu,
		Dissecting ethereum blockchain analytics: What we learn from topology and geometry of the ethereum graph?,
		in {\it Proceedings of the 2020 SIAM International Conference on Data Mining}, SIAM, pp.~523--531, 2020.
		
		\bibitem{Nicolau2011}
		M. Nicolau, A.~J. Levine and G. Carlsson,
		Topology based data analysis identifies a subgroup of breast cancers with a unique mutational profile and excellent survival,
		{\it Proceedings of the National Academy of Sciences}, vol.~108, no.~17, pp.~7265--7270, 2011.
		
		\bibitem{Victor2020}
		F. Victor,
		Address clustering heuristics for Ethereum,
		in {\it International Conference on Financial Cryptography and Data Security}, Springer, vol.~12059, pp.~617--633, 2020.
		
		\bibitem{AlphaCore2021}
		F. Victor, C.~G. Akcora, Y.~R. Gel and M. Kantarcioglu,
		Alphacore: Data depth based core decomposition,
		in {\it Proceedings of the 27th ACM SIGKDD Conference on Knowledge Discovery \& Data Mining}, pp.~1625--1633, 2021.
		
		
		\bibitem{ZomorodianCarlsson2005}
		A. Zomorodian and G. Carlsson,
		Computing persistent homology,
		{\it Discrete \& Computational Geometry}, {\bf 33} (2005), 249--274.
		
	\end{thebibliography}
\end{document}